\newtheorem*{theorem}{Theorem}
\newcommand{\skiptext}[1]{}
\title{LLM-initialized Differentiable Causal Discovery}
\author{%
  Shiv Kampani \\
  SandboxAQ \\
  \texttt{shiv.kampani@sandboxaq.com}
  \And
  David Hidary \\
  SandboxAQ \\
  \texttt{david.hidary@sandboxaq.com}
  \And
  Constantijn van der Poel \\
  SandboxAQ \\
  \texttt{constantijn@sandboxaq.com}
  \And
  Martin Ganahl \\
  SandboxAQ \\
  \texttt{martin.ganahl@sandboxaq.com}
  \And
  Brenda Miao \\
  SandboxAQ \\
  \texttt{brenda.miao@sandboxaq.com}
}
\begin{document}

\maketitle

\begin{abstract}
The discovery of causal relationships between random variables is an important yet challenging problem that has applications across many scientific domains. Differentiable causal discovery (DCD) methods are effective in uncovering causal relationships from observational data; however, these approaches often suffer from limited interpretability and face challenges in incorporating domain-specific prior knowledge. In contrast, Large Language Models (LLMs)-based causal discovery approaches have recently been shown capable of providing useful priors for causal discovery but struggle with formal causal reasoning. In this paper, we propose LLM-DCD, which uses an LLM to initialize the optimization of the maximum likelihood objective function of DCD approaches, thereby incorporating strong priors into the discovery method. To achieve this initialization, we design our objective function to depend on an explicitly defined adjacency matrix of the causal graph as its only variational parameter. Directly optimizing the explicitly defined adjacency matrix provides a more interpretable approach to causal discovery. Additionally, we demonstrate higher accuracy on key benchmarking datasets of our approach compared to state-of-the-art alternatives, and provide empirical evidence that the quality of the initialization directly impacts the quality of the final output of our DCD approach. LLM-DCD opens up new opportunities for traditional causal discovery methods like DCD to benefit from future improvements in the causal reasoning capabilities of LLMs.
\end{abstract}

\section{Introduction}

Discovering causal relationships is a fundamental task across scientific fields including epidemiology, genetics, and economics. For a variety of reasons – ethical, logistical, legal – it may not be possible to conduct controlled experiments or interventional studies to generate Causal Graphical Models (CGM) that allow for causal inference. Consequently, there has been a shift towards the development of \textit{causal discovery} methods that infer CGMs from observations about the variables alone.

\subsection{Background}

Causal discovery is the problem of learning a CGM from a set of observed data points. 
Each observation $n, n = 1,\dots,N$, can be characterized as a specific realization of $d$ random variables $V = \{v_1, \dots, v_d\}$, where the variable $v_j$ can take on discrete or continuous values. In this work we restrict ourselves to discrete random variables. We use $\mathbf{x}^n$ to denote the vector of $d$ discrete values $x^n_j, j = 1,\dots, d$ of the $n$-th observation,  and denote the table of all $N$ observations as $X \equiv \{ \mathbf{x}^n \}^N_{n = 1}$. 

A CGM for the variables $V$ consists of two components: 1) a directed acyclic graph (DAG) $G = (V, E)$ with  nodes $V$ and {\it directed} edges $E\equiv\{e_1,\dots e_M\} \subseteq V \times V$  of (ordered) pairs $e \equiv (v_a, v_b)$, with the order implying causation, i.e. $v_a$ causes $v_b$ in this notation. 2) a set of $d$ conditional probability distributions $p ( \, v_j \; | \; \mathrm{Pa}(v_j, G) )$, where $\forall j \in [\, d \,]$, $\mathrm{Pa}(v_j, G) \subseteq V $is the set of \textit{causal parents} or direct causes of $v_j$ according to $G$. In this work we do not consider interventions, although they can be incorporated into our framework.

Since causal discovery is an NP-hard problem  (\cite{chickering}), a variety of approaches have been studied to improve the efficiency and accuracy of causal discovery. 
These methods often fall into three categories - score-based methods (SBMs), differentiable causal discovery (DCD), and Large Language Model (LLM)-based approaches.

\textbf{Score-based methods.} SBMs formulate causal discovery as the maximization of a log-likelihood objective function 
\begin{equation}\label{eq:sbm_loss}
    \mathcal{L}(G, \theta ; X) = \frac{1}{N}\sum^N_{n = 1} \sum^d_{j = 1} \log p(v_j =x^n_j \, ; \, X, \theta, G) \;
\end{equation}
with respect to parameters $\theta$ and the graph $G$  (regularization penalties, typically added to the cost function, are omitted here for simplicity). The functions $p(v^n_j \, ; \, X, \theta, G)$ are ansatz functions for the conditional probability distributions $p ( \, v_j \; | \; \mathrm{Pa}(v_j, G) )$. \cite{brouillard2020differentiablecausaldiscoveryinterventional} showed that, under certain regularity assumptions (see Appendix \ref{app:proof}), the maximizer of this objective function is {\it Markov equivalent} to the ground-truth CGM.

In this approach, acyclicity and directedness can be enforced by applying suitable constraints to the discrete optimization problem.  An advantage is that the optimization is amenable to standard combinatiorial optization methods, however,  due to the super-exponential growth of the solution space with the size of the graph, score-based approaches become quickly intractable (\cite{Meek1997}).

\textbf{Differentiable causal discovery.} More recent approaches formulate causal discovery as a continuous optimization problem over the adjacency matrix $A_{\theta}$ of the graph $G$ and parameters $\theta$. The notation $A_{\theta}$  here means that the adjacency matrix can be an arbitrary differentiable function of the parameter $\theta$. Acyclicity and directedness constraints of the graph are incorporated into the optimization by a differentiable penalty function  $h(A_{\theta})$, i.e.
\begin{equation}\label{eq:dcd_loss}
    \mathcal{L}(A_{\theta}, \theta ; X) = \frac{1}{N}\sum^N_{n = 1} \sum^d_{j = 1} \log p(v_j =x^n_j \, ; \, X, \theta, A_{\theta})  - \beta h(A_{\theta})\;
\end{equation}

The function $h(A)$ needs to satisfy the conditions  (\cite{nazaret2024stabledifferentiablecausaldiscovery})
\begin{align}
 h(A) \geq 0&\\
 h(A) = 0& \iff G \;\textrm{a directed acyclic graph}.
\end{align}
The objective is then maximized using standard gradient ascent optimization for parameters $A_{\theta}$ and $\theta$. Acyclicity and directedness are in this case not strictly enforced during the optimization, but approached during the maximisation of the objective function. 

\cite{zheng2018dagstearscontinuousoptimization} were the first to propose NOTEARS, a differentiable causal discovery approach based on a trace-exponential acyclicity constraint $h(A_{\theta})=\textrm{Tr}(\exp(A_{\theta}))-d$.  \cite{bello2023dagmalearningdagsmmatrices} later on introduced DAGMA, a direct improvement over NOTEARS owing to an alternate log-det-based acyclicity constraint. 
\cite{nazaret2024stabledifferentiablecausaldiscovery}  proposed a more stable acyclicity constraint in the form of the spectral radius of $A_\theta$, i.e. $h(A_{\theta}) = |\lambda_d|$, with $\lambda_d$ the largest eigenvalue of the matrix. Note that the spectral radius of the adjacency matrix of a DAG is identically 0 ($A$ is a nilpotent matrix in this case). We employ the spectral acyclicity constraint in this work.

\textbf{LLM-based approaches.} LLMs have shown promise in being able to evaluate pairwise causal relationships between variables of interest (\cite{kıcıman2023causalreasoninglargelanguage, liu2024largelanguagemodelscausal}). However, LLM-based approaches are often unable to perform formal causal reasoning and can exhibit inconsistences; it is also difficult to distinguish true causal reasoning from memorization (\cite{jin2024cladderassessingcausalreasoning}). Also, unlike previous approaches, LLMs are not designed to leverage observational data effectively. 


\subsection{Problem setup.} Recent approaches have sought to merge the advantages of LLM-based and SBM approaches. \cite{darvariu2024largelanguagemodelseffective} showed that LLMs can provide effective priors to improve score-based algorithms. LLMs have also been used specify constraints for SBMs (\cite{ban2023querytoolscausalarchitects}), by orienting edges in partial CGMs discovered by other numerical approaches (\cite{long2023causaldiscoverylanguagemodels}), or by providing a ``warm-start" or initial point for combinatorial search-based methods (\cite{vashishtha2023causalinferenceusingllmguided}). These results suggest \textbf{LLMs may also be able to complement and improve the performance of state-of-the-art DCD methods}. However, integrating LLM-based causal discovery methods with DCD is challenging due to \textit{non-interpretable} adjacency matrices.


The function $p( \cdot )$ used to model conditional distributions in the DCD loss, Eq. (\ref{eq:dcd_loss}), is represented using a neural network with parameters $\theta$. The adjacency matrix $W_\theta$ of the CGM DAG is then implicitly defined from $\theta$, by taking a norm over the first layer of the network. \cite{Waxman_2024} show that when $p( \cdot )$ is modelled using a Multi-Layer Perceptron (MLP) with sigmoid or ReLU activation (as in NOTEARS, DAGMA, and SDCD), the implicit adjacency matrix, $W_\theta$, from the first layer of the network may be arbitrarily different from the true causal relationships found by taking derivatives over all model layers.
The MLP parameters $\theta$ do not directly relate to the implied adjacency matrix $W_\theta$, so the implicit adjacency matrix $W_\theta$ is \textbf{non-interpretable}.

Said another way, suppose an LLM (correctly) reasons that variable $v_a$ directly causes $v_b$. It is clear that $(W_\theta)_{ab} = 1$. However, it is not clear how to modify the MLP parameters, $\theta$, to encode this information and leverage information from LLMs: hence the non-interpretability.

\subsection{Contributions}
In this work we propose a novel combination of large language models (LLM) with DCD. The two key aspects of our approach are:
\begin{enumerate}
    \item an ansatz function $p(v_j =x^n_j \, ; \, X, A)$ in Eq.(\ref{eq:mleinterp}) [\texttt{MLE-INTERP}], which depends on elements $a_{jk}\geq 0$ of an explicitly defined adjacency matrix $A$ as the only variational parameters. \texttt{MLE-INTERP} does not rely on an MLP or neural network to model conditional distributions.
    \item the usage of LLMs for parameter initialization of the adjacency matrix $A$. This is possible because we use an ansatz function depending on an explicitly defined adjacency matrix. 

\end{enumerate}

We make the following assumptions in our approach:
\begin{enumerate}[noitemsep]
    \item \textit{Faithfulness, causal Markov condition:} conditional independence holds in the observational data table $X$ if and only if the corresponding d-separation holds in the CGM.
    \item \textit{Causal sufficiency:} there are no hidden or latent confounders.

\end{enumerate}

We coin our approach LLM-initialized Differentiable Causal Discovery (LLM-DCD). Our method is applicable to datasets with discrete-valued variables, and we do not make any assumptions about conditional distributions. This addresses a limitation in DCD methods like SDCD that assume that conditionals are normally-distributed. To our knowledge, LLM-DCD is the first method to integrate LLMs with differentiable causal discovery.

We prove that our model satisfies the standard regularity assumptions for DCD (\cite{brouillard2020differentiablecausaldiscoveryinterventional}, Appendix \ref{app:proof}) and benchmark the performance of LLM-DCD on five datasets ranging from 5-70 discrete-valued variables and limited observational data. LLM-DCD outperforms existing methods and scales reasonably with time. We also empirically show that the quality of LLM-DCD depends on the quality of its initialization; thus, the development of higher quality LLM-based causal discovery methods is expected to have direct impact on the quality of LLM-DCD.

All code from this work is available at: \url{https://github.com/sandbox-quantum/llm-dcd}

\section{LLM-initialized Differentiable Causal Discovery}

We consider the differentiable objective function
\begin{equation}\label{eq:ourloss}
    \mathcal{L}(A ; X) = \frac{1}{n}\sum^N_{n = 1} \sum^d_{j = 1} \log \texttt{MLE-INTERP}(x^n_j ; X, A) - \alpha ||A||_1 - \beta_t |\lambda_d |\;.
\end{equation}
with an ansatz function $\texttt{MLE-INTERP}(x^n_i; X,A)$ to model the conditional probability $ p(v_i = x_i|\{v_{k} = x_{k}\}_{k\neq i})\equiv\frac{ p(x_i,\{x_k\}_{k\neq i})}{p(\{x_k\}_{k\neq i})}$ of observation $x_i$ given values $\{x_k\}_{k\neq i}$ for the variables $\{v_k\}_{k\neq i}$ from the set of training data $X$. 

Specifically, we use a maximum-likelihood based estimator of this conditional probability which is computed from ratios of frequency counts of observations in the training data. In the following we use $\textrm{cnt}(x_{j_1}, x_{j_2},\dots x_{j_{M}})$ to denote the number of samples in $X$ with values $x_{j_1}, x_{j_2},\dots x_{j_{M}}$.

Consider as an example the case of two variables $v_1, v_2$, with $a_{21}$ the only possible non-zero element of $A$. We use the following ansatz for the conditional distribution $p(v_1=x_1|v_2=x_2)$ is in this case:
\begin{equation}
     \texttt{MLE-INTERP}(x_1 ; X, A) =  \frac{\textrm{cnt}(x_1)(1-a_{21})+\textrm{cnt}(x_1, x_2)a_{21}}{N(1-a_{21} ) + \textrm{cnt}(x_2)a_{21}}.
\end{equation}

In the limits of $a_{21} = 1$ and $a_{21} = 0$, the function reduces to the expected expressions $   \frac{\textrm{cnt}(x_1, x_2)}{\textrm{cnt}(x_2)}$ and $ \frac{\textrm{cnt}(x_1)}{N}$, respectively. For values $0< a_{21} < 1$ the nominator and denominator are taken to be a linear interpolation between the counts of the two edge cases. This ansatz can be straightforwardly generalized to $d$ variables:
\small
\begin{equation}
\label{eq:mleinterp-1}
 \texttt{MLE-INTERP}(x_i ; X, A) = \frac{\textrm{cnt}(x_i)\prod_{k\neq i}^d(1-a_{ki}) +\sum_{j\neq i} \textrm{cnt}(x_i, x_j)a_{ji}\prod_{k\neq \{i, j\}}(1-a_{ki}) + \hdots}{N\prod_{k\neq i}(1-a_{ki}) +\sum_{j} \textrm{cnt}(x_j)a_{ji}\prod_{k\neq j}(1-a_{ki}) + \hdots}.
\end{equation}
\normalsize

We emphasize that despite the seeming complexity, this function can be evaluated in $\mathcal{O}(Nd)$ time;  see also Algorithm \ref{alg:mleinterp} in the Appendix.
Eq.(\ref{eq:mleinterp-1}) can be rewritten in a more succinct form: using the notation  $\bar x_j$ to denote all values different from $x_j$, such that e.g. $\textrm{cnt}(x_{1}, \bar x_{2})$ are the number of samples in $X$ with values $v_1 = x_1$ and values $v_2 \neq x_2$, and with the shorthand 
\begin{align}
   \delta_k(x_j) = 
    \begin{cases}
        x_j\; {\rm if}\; k=0\\
        \bar x_j\; {\rm if}\; k=1
    \end{cases}
\end{align}
Eq.(\ref{eq:mleinterp-1}) can be recast into
\begin{align}
    \texttt{MLE-INTERP}(x_i ; X, A) = \frac{\sum_{\{i_k\}_{k = 1\dots d-1} \in \{0,1\}} \textrm{cnt}(x_i,\{\delta_{i_k}(x_{j_k})\}_{j_k\neq i})\prod_{k=1}^{d-1} (1-i_k a_{j_ki})}{\sum_{\{i_k\}_{k = 1\dots d-1} \in \{0,1\}}\textrm{cnt}(\{\delta_{i_k}(x_{j_k})\}_{j_k\neq i})\prod_{k=1}^{M_i} (1-i_ka_{j_ki})}.\label{eq:mleinterp}
\end{align}
where $\{x_k\}_{k\neq i}$ denotes all variables $x_k$  with $k\neq i$. 

Eq.(\ref{eq:mleinterp})  evidently is a rational function of the the elements of the adjacency matrix $A$, for which gradients w.r.t. $a_{ji}$ can be computed efficiently.  To improve stability during the optimization, we replace factors $(1-i_ka_{j_ki})$ in Eq.(\ref{eq:mleinterp}) with a third order polynomial  $g(1-i_ka_{j_ki})$. We chose $g$ as follows because it is a smooth function that satisfies $g(0) = 0, g(1) = 1$, and $\forall x \in [0, 1], g'(x) \ne 0$.
\begin{equation}
    g(x) = 0.15x + 2.55x^2 - 1.7x^3.
\end{equation}


The objective function Eq.(\ref{eq:ourloss}) is maximized using an Adam optimizer (\cite{adam}) with mini-batch gradient ascent.
Note that spectral acyclicity, $h(A) = 0$, is included in the objective via the penalty $\beta_t \cdot h(A)$ with a step-dependent coefficient  $\beta_t$. The optimization is carried out over two stages, as in described in \cite{nazaret2024stabledifferentiablecausaldiscovery}. In the first stage, $\beta_t$ is kept 0 for $t_1$ iterations, with the goal of maximizing log-likelihood. In the second stage, $\beta_t$ is incremented by $\delta$ at every timestep, with the goal of ensuring acyclicity, for at least $t_2$ iterations, until $A$ converges.

\skiptext{
\begin{algorithm}[h]
\caption{\quad $\texttt{MLE-INTERP}(\mathbf{x}^i_j \, ; \, \mathbf{x}^i, W)$}\label{alg:mleinterp}
\begin{algorithmic}
    \State{\textbf{Input}: $\; i, j, X, W$}
    \State{$\texttt{num, den} \gets 0$}
    \State{$\lambda \gets 0.15$}
    \For{$k \in [n]$}
        \State{$\texttt{numprod, denprod} \gets 1$}
        \For{$m \in [d]$}
            \State{$\texttt{numprod} \gets \texttt{numprod} \cdot \left( ( 1 \text{ if } \mathbf{x}^k_m = \mathbf{x}^i_m \text{ else } g_\lambda(1 -a_{mj}) ) \text{ if } \mathbf{x}^k_j = \mathbf{x}^i_j \text{ else } 0 \right)$}
            \State{$\texttt{denprod} \gets \texttt{denprod} \cdot ( 1 \text{ if } \mathbf{x}^k_m = \mathbf{x}^i_m \text{ or } m = j \text{ else } g_\lambda(1 -a_{mj}) )$}
        \EndFor
        \State{$\texttt{num} \gets \texttt{num} + \texttt{numprod}$}
        \State{$\texttt{den} \gets \texttt{den} + \texttt{denprod}$}
    \EndFor
    \State \Return \, (\texttt{num / den})
\end{algorithmic}
\end{algorithm}

The maximization of the objective function is carried out using the \texttt{LLM-DCD} algorithm summarized in Algorithm (\ref{alg:llm_dcd}). For more details on hyperparameter settings we refer the reader to Section \ref{sec:exp}.

\begin{algorithm}[h]\label{alg:llm_dcd}
\caption{\quad $\texttt{LLM-DCD}$}
\begin{algorithmic}
    \State{\textbf{Input}: $\; X, A_0, i_1, i_2, \mathrm{lr}, \alpha, \delta, \gamma_1, \gamma_2, \delta, \varepsilon$}
    \State{$A_t \gets A_0$}
    \State{$M_t, V_t \gets 0^{d \times d}$}
    \State{$t, \beta_t \gets 0$}
    \While{$t \le i_1 + i_2$ and \textit{not converged}}
        \State{$t \gets t + 1$}
        \State{$\texttt{D} \gets \text{gradient of objective function $\mathcal{L}(X ;A)$ with respect to } W$}
        \State{$M_t \gets ({\gamma_1} M_t + (1 - \gamma_1) \texttt{D}) / \gamma_1^t$}
        \State{$V_t \gets ({\gamma_2} V_t + (1 - \gamma_2) (\texttt{D} \odot \texttt{D}) ) / \gamma_2^t$}
        \State{$A_t \gets \text{clip}(A_t + \mathrm{lr} \cdot \texttt{divide}(M_t, \sqrt{V_t} + \varepsilon), 0, 1)$}
        \State{$\beta_t \gets \beta_t + (\delta \text{ if } t > i_1 \text{ else } 0)$}
    \EndWhile
\end{algorithmic}
\end{algorithm}
In \texttt{LLM-DCD}, $\odot$ denotes element-wise multiplication and $\texttt{divide}( \cdot )$ denotes element-wise division. The gradient, \texttt{D}, of the objective function can be computed using $\texttt{D-MLE-INTERP}(\mathbf{x}^i_j \, ; \, \mathbf{x}^i, A)$, summed over all $i, j$ and by computing the derivative of the spectral acyclicity constraint, $h(A)$.
For an algorithm to calculate \texttt{D-MLE-INTERP} with respect to $W$, see Appendix (\ref{app:alg}).
}

\subsection{LLM-initialization}
Since the model parameters are exactly the adjacency matrix $A$ of the CGM, we may choose to initialize the optimization process with a ``warm start" adjacency matrix $A_0$ provided by an LLM.
We consider ``warm starts" provided both by the pairwise (PAIR) and breadth-first-search (BFS):
\begin{enumerate}[noitemsep]
    \item \textbf{Pairwise LLM queries.} This is a straightforward LLM-based solution to the problem of causal discovery introduced by \cite{kıcıman2023causalreasoninglargelanguage}. For every pair of variables $(v_a, v_b)$, an LLM is asked to reason whether $v_a$ directly causes $v_b$, $v_b$ directly causes $v_a$, or that there is no direct causal relationship between the two variables.
    \item \textbf{LLM breadth-first-search (BFS).} The number of queries made by the pairwise LLM method scales as $O(d^2)$, where $d$ is the number of variables. \cite{jiralerspong2024efficientcausalgraphdiscovery} proposed a three-stage BFS-based algorithm that requires only $O(d)$ queries.
\end{enumerate}
For both methods, we also provided the LLMs with pairwise correlational coefficients where relevant.
In Section \ref{sec:results}, we provide empirical evidence that LLM-initialization improves the quality of LLM-DCD over previous state-of-the-art methods.

\section{Experimental Setup}\label{sec:exp}
We benchmark the performance of LLM-DCD against previous score-based method GES, differentiable methods SDCD and DAGMA, and the aforementioned LLM-based methods (PAIR and BFS) on the following CGM datasets from the \texttt{bnlearn} package (\cite{bnlearnpaper}): \texttt{cancer} (5 variables, 4 causal edges), \texttt{sachs} (11 variables, 17 causal edges), \texttt{child} (20 variables, 25 causal edges), \texttt{alarm} (37 variables, 46 causal edges), and \texttt{hepar2} (70 variables, 123 causal edges).

All observational data ($n = 1000$ observations for each experiment) are generated by independent forward-sampling from the ground-truth CGM joint distribution. We run  LLM-DCD with a random initialization, initialized with the output of PAIR (denoted LLM-DCD-P or LLM-DCD (PAIR)), and initialized with the output of BFS (denoted LLM-DCD-B or LLM-DCD (BFS)). Each method was run with three global seeds (0,1,2) on each dataset. Consistent with previous works, we run LLM-DCD with a fixed set of hyperparameters (Appendix \ref{app:hyperp}) in all experiments. 

\textbf{Metrics} We report performance of LLM-DCD and baseline models using structural Hamming distance (SHD) between the predicted and true CGMs. SHD is a standard causal discovery metric and is measured as the number of causal edge insertions, deletions, or flips required to transform the current CGM DAG into the ground-truth CGM DAG. We also provide results based on precision, recall, F1-score, and runtime (seconds). Results are reported as means $\pm$ standard deviation.

\section{Results}  \label{sec:results}

\textbf{Theoretical results.} In Appendix \ref{app:proof}, we prove that the \texttt{MLE-INTERP} model used by LLM-DCD satisfies regularity assumptions introduced by \cite{brouillard2020differentiablecausaldiscoveryinterventional}. As a corollary, in limit of infinite samples ($n \to \infty$), LLM-DCD outputs a CGM that is \textit{Markov equivalent} to the ground-truth CGM. We also provide an analysis of the time-complexity of LLM-DCD in Appendix \ref{app:time}. The current implementation of LLM-DCD calculates $h'(A)$ naïvely in $O(d^3)$-time, but future implementations are likely to benefit from utilizing the recently developed $O(d^2)$-time power-iteration algorithm from \cite{nazaret2024stabledifferentiablecausaldiscovery}.

\begin{table}[!ht]
\centering
\caption{Performance (SHD) of causal discovery methods on benchmarking datasets}
\label{SHD_table}
\begin{tabular}{lccccc}
\toprule
\multicolumn{1}{c}{} & \multicolumn{5}{c}{\textbf{Dataset}} \\ 
\cmidrule(lr){2-6}
 & Cancer & Sachs & Child & Alarm & Hepar2 \\ 
\textbf{Method} & (n=5) & (n=11) & (n=20) & (n=37) & (n=70) \\ 
\midrule
SDCD        & $2.00 \pm 0.00$  & $\mathbf{12.3 \pm 2.08}$ & $35.7 \pm 0.58$  & $131. \pm 11.6$ & $230. \pm 13.6$  \\
GES         & $2.67 \pm 1.15$  & $\mathbf{14.3 \pm 1.53}$& $\mathbf{26.7 \pm 3.51}$& $60.5 \pm 0.71$ & $-$   \\
DAGMA       & $4.00 \pm 0.00$           & $25.3 \pm 1.53$          & $32.7 \pm 0.58$ & $88.3 \pm 9.29$ & $272. \pm 6.43$   \\
PAIR        & $4.00 \pm 0.00$           & $31.0 \pm 0.00$          & $97.3 \pm 10.2$ & $207. \pm 9.54$ & $-$   \\
BFS         & $\mathbf{1.00 \pm 1.00}$  & $16.3 \pm 2.08$          & $36.3 \pm 3.51$ & $64.0 \pm 5.29$ & $173. \pm 13.1$   \\
LLM-DCD     & $3.67 \pm 0.58$           & $23.7 \pm 9.02$          & $49.7 \pm 5.51$ & $58.0 \pm 1.73$ & $318. \pm 8.89$   \\
LLM-DCD-P   & $2.67 \pm 0.58$           & $21.0 \pm 3.46$          & $41.0 \pm 3.61$ & $60.3 \pm 18.8$ & $-$   \\
LLM-DCD-B   & $\mathbf{1.00 \pm 1.00}$  & $\mathbf{14.3 \pm 0.58}$& $\mathbf{23.7 \pm 0.58}$ & $\mathbf{36.3 \pm 5.51}$  & $\mathbf{140. \pm 6.66}$ \\

\bottomrule
\end{tabular}
\end{table}

\textbf{Evaluation Results}
Table \ref{SHD_table} shows the performance of LLM-DCD-B on several benchmarking datasets of varying sizes. Lower SHD indicates better performance. LLM-DCD (BFS) outperformed all baseline SBM, DCD, and LLM based approaches in the Alarm and Hepar2 datasets, and achieved results that were comparable to the top-performing models on the Cancer, Sachs, and Child datasets. Results are not reported for methods with intractable runtimes, including GES, PAIR, and LLM-DCD (Pair) on the Hepar2 dataset.

In Figure \ref{fig:other_metrics_plot}, we observe similar trends for F1-score, Precision, and Recall, with LLM-DCD-B consistently outperforming other methods across datasets. The runtime of LLM-DCD scales worse than that of SDCD, which remains roughly constant, although LLM-DCD (BFS) was still more efficient than GES, PAIR, and DAGMA (Figure \ref{fig:other_metrics_plot}).

We further show that initialization of the adjacency matrix in LLM-DCD affects performance, with higher quality initializations tending to result to better performance across metrics and datasets. LLM-DCD (BFS) tended to outperform LLM-DCD (PAIR) and the randomly initialized LLM-DCD. Additionally, we show that the randomly initialized LLM-DCD method has performance that is comparable to existing DCD methods like DAGMA and SDCD.

\begin{figure}[!h]
\label{fig:other_metrics_plot}
  \centering
  \includegraphics[width=0.8\linewidth]{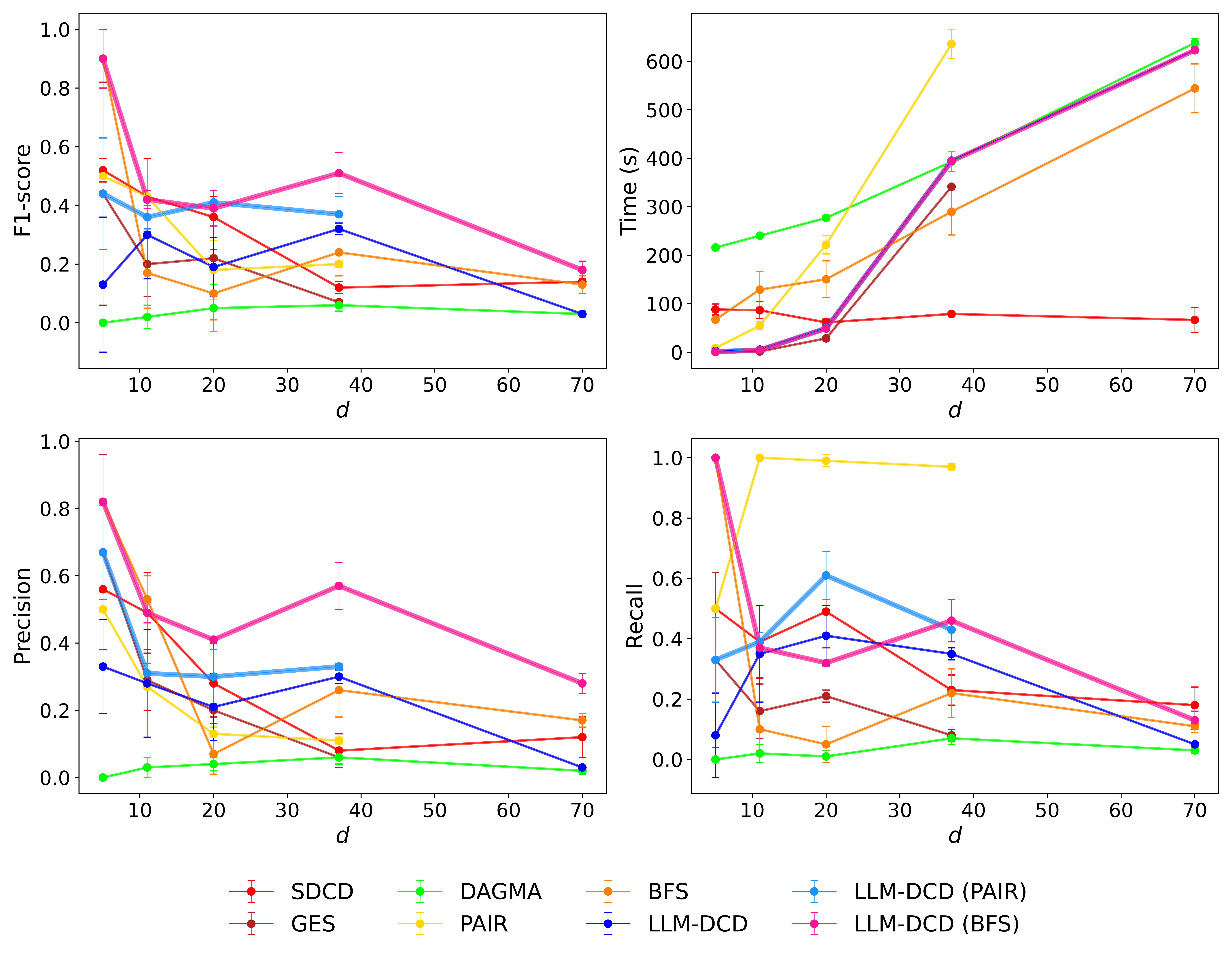}
  \caption{F1-score, precision, recall, and runtime of causal discovery methods on observational datasets of different sizes ($d$).}
\end{figure}

\section{Conclusion}
We developed LLM-DCD, integrating LLMs with DCD to take advantage of the prior knowledge learned by LLMs while maintaining the performance and computational efficiency of DCD approaches for causal discovery. LLM-DCD outperforms previous state-of-the-art approaches across several causal discovery benchmarking datasets. Although not as efficient as the recent developed SDCD, LLM-DCD is more scalable than several other baseline SBM and DCD methods. Future implementations of LLM-DCD may be able to directly integrate the computational optimization of SDCD \cite{nazaret2024stabledifferentiablecausaldiscovery} to improve scalability while preserving state-of-the-art performance.

Because our approach also directly optimizes an explicitly defined adjacency matrix, LLM-DCD also provides a more interpretable approach to causal discovery. Additionally, LLM-DCD directly benefits from higher quality initializations of this adjacency matrix, and can take advantage of future advancements in LLM reasoning and causal inference capabilities. Other future work may investigate how the size and reasoning capabilities of various LLMs can affect initialization of the adjacency matrix and downstream performance of LLM-DCD. As LLM-based causal discovery evolves, the performance of LLM-DCD is only expected to improve as well.

\bibliography{final_paper}


\appendix

\section{Appendix}

\subsection{Implementation details of \texttt{MLE-INTERP} and gradient ascent optimization}
Algorithm (\ref{alg:mleinterp}) details our implementation of Eq.(\ref{eq:mleinterp}) in the main text. 
\begin{algorithm}[h]
\caption{\quad $\texttt{MLE-INTERP}(x^i_j \, ; \, X,A)$}\label{alg:mleinterp}
\begin{algorithmic}
    \State{\textbf{Input}: $\; i, j, X, A$}
    \State{$\texttt{num, den} \gets 0$}
    \For{$k \in [n]$}
        \State{$\texttt{numprod, denprod} \gets 1$}
        \For{$m \in [d]$}
            \State{$\texttt{numprod} \gets \texttt{numprod} \cdot \left( ( 1 \text{ if } \mathbf{x}^k_m = \mathbf{x}^i_m \text{ else } g(1 -a_{mj}) ) \text{ if } \mathbf{x}^k_j = \mathbf{x}^i_j \text{ else } 0 \right)$}
            \State{$\texttt{denprod} \gets \texttt{denprod} \cdot ( 1 \text{ if } \mathbf{x}^k_m = \mathbf{x}^i_m \text{ or } m = j \text{ else } g(1 -a_{mj}) )$}
        \EndFor
        \State{$\texttt{num} \gets \texttt{num} + \texttt{numprod}$}
        \State{$\texttt{den} \gets \texttt{den} + \texttt{denprod}$}
    \EndFor
    \State \Return \, (\texttt{num / den})
\end{algorithmic}
\end{algorithm}

The maximization of the objective function is carried out using the \texttt{LLM-DCD} algorithm summarized in Algorithm (\ref{alg:llm_dcd}). For more details on hyperparameter settings we refer the reader to Appendix \ref{app:hyperp}.

\begin{algorithm}[h]
\caption{\quad $\texttt{LLM-DCD}$}\label{alg:llm_dcd}
\begin{algorithmic}
    \State{\textbf{Input}: $\; X, A_0, i_1, i_2, \mathrm{lr}, \alpha, \delta, \gamma_1, \gamma_2, \delta, \varepsilon$}
    \State{$A_t \gets A_0$}
    \State{$M_t, V_t \gets 0^{d \times d}$}
    \State{$t, \beta_t \gets 0$}
    \While{$t \le i_1 + i_2$ and \textit{not converged}}
        \State{$t \gets t + 1$}
        \State{$\texttt{D} \gets \text{gradient of objective function $\mathcal{L}(X ;A)$ with respect to } W$}
        \State{$M_t \gets ({\gamma_1} M_t + (1 - \gamma_1) \texttt{D}) / \gamma_1^t$}
        \State{$V_t \gets ({\gamma_2} V_t + (1 - \gamma_2) (\texttt{D} \odot \texttt{D}) ) / \gamma_2^t$}
        \State{$A_t \gets \text{clip}(A_t + \mathrm{lr} \cdot \texttt{divide}(M_t, \sqrt{V_t} + \varepsilon), 0, 1)$}
        \State{$\beta_t \gets \beta_t + (\delta \text{ if } t > i_1 \text{ else } 0)$}
    \EndWhile
\end{algorithmic}
\end{algorithm}
In \texttt{LLM-DCD}, $\odot$ denotes element-wise multiplication and $\texttt{divide}( \cdot )$ denotes element-wise division. The gradient, \texttt{D}, of the objective function can be computed using $\texttt{D-MLE-INTERP}(x^i_j \, ; \, C, A)$, summed over all $i, j$ and by computing the derivative of the spectral acyclicity constraint, $h(A)$.
For an algorithm to calculate \texttt{D-MLE-INTERP} with respect to $W$, see Appendix (\ref{app:alg}).

\subsection{LLM-DCD Hyperparameters} \label{app:hyperp}

For the LLM-DCD algorithm specified in Algorithm \ref{alg:llm_dcd}, we use the following hyperparameters: $i_1 = 200, i_2 = 400, \mathrm{lr} = 0.025, \alpha = 0.120, \delta=0.040, \gamma_1 = 0.80, \gamma_2 = 0.90, \varepsilon = 10^{-8}$. The only change across all datasets was in the minibatch-size ($b$) selected ($b = 125$ for \texttt{hepar2}, $b = 250$ for \texttt{alarm} and $b = 500$ for other datasets). Black-box implementations for all other methods were used with no modifications, except for DAGMA, for which we ran $T=1$ iterations instead of $T=4$, to keep runtimes comparable with other methods. For LLM methods, we used OpenAI GPT-4 (\texttt{gpt-4-0613}) with default settings. All experiments were conducted on a virtual machine provided by Google Colaboratory, utilizing an NVIDIA A100 Tensor Core GPU.

\subsection{Time-complexity analysis} \label{app:time}
$\forall i, j, \; \texttt{D-MLE-INTERP}(x^i_j ;  X, A)$ (Appendix \ref{app:alg}) can be computed in $O(n d^3)$-time. Since $\texttt{D}$ requires a sum over all $i, j$, $O(n^2 d^4)$-time is needed. This computation, however, is easily parallelizable; using a GPU, only $O(nd)$ steps need to be carried out sequentially. Using mini-batch gradient descent, we can reduce the time-complexity to $O(b^2 d^4)$-time (with only $O(bd)$ sequential steps) ($b =$ mini-batch size). Computing $h'(A)$ naïvely requires $O(d^3)$-time in our current implementation, although we plan to use the $O(d^2)$-time power-iteration algorithm from\cite{nazaret2024stabledifferentiablecausaldiscovery} in future studies.

\subsection{Derivative of \texttt{MLE-INTERP}} \label{app:alg}
The derivative of \texttt{MLE-INTERP} (\texttt{D-MLE-INTERP}) with respect to $W$.

\begin{algorithm}[h]
\caption{\quad $\texttt{D-MLE-INTERP}(x^i_j \, ; \, X, A)$}
\begin{algorithmic}
    \State{\textbf{Input}: $\; i, j, X, A$}
    \State{$\texttt{DLL} \gets 0^{d \times d}$}
    \For{$a \in [d]$}
        \For{$b \in [d] : a \ne b$}
            \State{$\texttt{num, den, dnum, dden} \gets 0$}
            \For{$k \in [n]$}
            \State{$\texttt{numprod, denprod, dnumprod, ddenprod} \gets 1$}
                \For{$m \in [d]$}  
                    \State{$\texttt{numterm} \gets \left( ( 1 \text{ if } \mathbf{x}^k_m = \mathbf{x}^i_m \text{ else } g(1 -a_{mj}) ) \text{ if } \mathbf{x}^k_j = \mathbf{x}^i_j \text{ else } 0 \right)$}
                    \State{$\texttt{denterm} \gets ( 1 \text{ if } \mathbf{x}^k_m = \mathbf{x}^i_m \text{ or } m = j \text{ else } g(1 - a_{mj}) )$}
                    \State{$\texttt{numprod} \gets \texttt{numprod} \cdot \texttt{numterm}$}
                    \State{$\texttt{denprod} \gets \texttt{denprod} \cdot \texttt{denterm}$}
                    \If{$a = m$ and $b = j$}
                        \State{$\texttt{dnumprod} \gets \texttt{dnumprod} \cdot ( 0 \text{ if } \mathbf{x}^k_m = \mathbf{x}^i_m \text{ or } \mathbf{x}^k_j = \mathbf{x}^i_j \text{ else } g'(1 - a_{mj}) )$}
                        \State{$\texttt{ddenprod} \gets \texttt{ddenprod} \cdot ( 0 \text{ if } \mathbf{x}^k_m = \mathbf{x}^i_m \text{ or } m = j \text{ else } g'(1 - a_{mj}) )$}
                    \Else
                        \State{$\texttt{dnumprod} \gets \texttt{dnumprod} \cdot \texttt{numterm} $}
                        \State{$\texttt{ddenprod} \gets \texttt{ddenprod} \cdot \texttt{denterm} $}
                    \EndIf
                \EndFor
                \State{$\texttt{num} \gets \texttt{num} + \texttt{numprod}$}
                \State{$\texttt{den} \gets \texttt{den} + \texttt{denprod}$}
                \State{$\texttt{dnum} \gets \texttt{dnum} + \texttt{dnumprod}$}
                \State{$\texttt{dden} \gets \texttt{dden} + \texttt{ddenprod}$}
            \EndFor
            \State $\texttt{DLL}[a, b] \gets (\texttt{dnum / num} - \texttt{dden / den})$
        \EndFor
    \EndFor
    \State \Return \, $\texttt{DLL}$
\end{algorithmic}
\end{algorithm}

\subsection{Regularity assumptions} \label{app:proof}

\begin{theorem}[\cite{brouillard2020differentiablecausaldiscoveryinterventional}]
    Under the following \textbf{regularity assumptions} (these assumptions have been simplified from the original assumptions listed in \cite{brouillard2020differentiablecausaldiscoveryinterventional}, since we do not consider interventions):
    \begin{enumerate}[noitemsep]
        \item $n \ge 0$
        \item Faithfulness; causal Markov condition; causal sufficiency.
        \item The joint-distribution of the latent or ground-truth CGM and the all distributions from the model class ($\texttt{MLE-INTERP}$) have strictly-positive density.
        \item The model class is able to express the latent CGM conditional distributions in the limit of infinite samples ($n \to \infty$). This assumption is implicit in their work.
    \end{enumerate}
    Any CGM that maximizes of the DCD objective function is equivalent to the ground-truth CGM upto a Markov equivalence class (i.e., the CGM is Markov equivalent to the ground-truth CGM).
\end{theorem}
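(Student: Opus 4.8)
The plan is to reduce the statement to the theorem of \cite{brouillard2020differentiablecausaldiscoveryinterventional} by verifying that the \texttt{MLE-INTERP} model class meets the four listed regularity assumptions, and then to recall the population-level argument that turns those assumptions into Markov equivalence. Assumption 1 is vacuous and assumption 2 is postulated, so the work is in assumptions 3 and 4, which concern \texttt{MLE-INTERP} specifically.

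For strict positivity (assumption 3), I would note that when every entry of $A$ lies in $(0,1)$, after the $g$-reparametrization both the numerator and the denominator of Eq. (\ref{eq:mleinterp}) are sums of non-negative frequency counts in which the fully-marginal terms $\textrm{cnt}(x_i)$ and $N$ carry a strictly positive coefficient; hence \texttt{MLE-INTERP} defines a strictly positive conditional distribution whenever the ground-truth $p^*$ does, and at a $0/1$ matrix one falls back on the positivity of $p^*$ together with $n\to\infty$ (or an infinitesimal Laplace smoothing). For expressiveness (assumption 4), I would substitute the $0/1$ adjacency matrix $A^*$ of the ground-truth DAG $G^*$ into Eq. (\ref{eq:mleinterp}): every product collapses to the single surviving term and the ratio becomes $\textrm{cnt}(x_i,\mathrm{Pa}(v_i))/\textrm{cnt}(\mathrm{Pa}(v_i))$, which converges almost surely to $p^*(v_i\mid \mathrm{Pa}(v_i,G^*))$ as $n\to\infty$ by the strong law of large numbers. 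Thus the model class realizes the ground-truth conditionals in the infinite-sample limit, and at binary $A$ the likelihood term of Eq. (\ref{eq:ourloss}) coincides, up to an additive constant, with the graph-factorized log-likelihood used by \cite{brouillard2020differentiablecausaldiscoveryinterventional}.

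Next I would recall why these suffice. In the population limit the likelihood term equals $-\sum_j H_{p^*}\big(v_j\mid \mathrm{Pa}(v_j,G)\big)$ up to a constant; by the entropy chain rule along a topological order of $G$ and the fact that conditioning cannot increase entropy, this is at most $-H_{p^*}(\mathbf{v})$, with equality exactly when the local Markov factorization holds — i.e.\ when $G$ is an I-map of $p^*$ — and strictly smaller for any non-I-map DAG. The causal Markov condition guarantees that $G^*$ attains this maximum; the spectral penalty $\beta_t|\lambda_d|$ forces $h(A)=0$ at any optimizer, so only DAGs are feasible; and the penalty $\alpha\lVert A\rVert_1$ breaks ties among DAG I-maps in favor of the sparsest ones. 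Faithfulness then ensures these minimal I-maps are precisely the elements of the Markov equivalence class of $G^*$, whence any maximizer of Eq. (\ref{eq:ourloss}) is Markov equivalent to $G^*$.

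I expect the main obstacle to be the continuous-relaxation gap. The argument above is cleanest at $0/1$ matrices, whereas the objective is optimized over $[0,1]^{d\times d}$ and \texttt{MLE-INTERP} genuinely interpolates the counts for fractional entries. One has to argue that no fractional $A$ can strictly beat the best binary matrix — intuitively, for fixed values of the other entries the relevant log-likelihood contribution, composed with the monotone $g$, is extremized at an endpoint, and fractional entries only enlarge the $\ell_1$ penalty — and that the two-stage $\beta_t$ schedule actually drives the iterate to such a vertex. Making this rigorous (or, alternatively, stating the conclusion for the thresholded output and controlling the rounding error) is where the care is needed; the remaining steps are the standard argument of \cite{brouillard2020differentiablecausaldiscoveryinterventional}.
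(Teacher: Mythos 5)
The statement you were asked to prove is, in the paper, an \emph{imported} result: it is cited from \cite{brouillard2020differentiablecausaldiscoveryinterventional} and never proved in the text. The paper's own proof work in Appendix \ref{app:proof} is confined to the companion theorem, namely verifying that \texttt{MLE-INTERP} satisfies the four regularity assumptions. On that overlapping portion your argument is essentially the paper's: for Assumption 4 you substitute the $0/1$ matrix $A^*$, observe that the products in Eq.~(\ref{eq:mleinterp}) collapse so the ratio becomes $\textrm{cnt}(x_i,\mathrm{Pa}(v_i))/\textrm{cnt}(\mathrm{Pa}(v_i))$, and invoke the law of large numbers — exactly the paper's derivation; for Assumption 3 your coefficient-positivity argument via $g>0$ on $(0,1]$ is in fact cleaner than the paper's informal remark about nonzero counts. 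Where you diverge is that you go on to reprove the imported theorem itself (the population-limit identity $\mathbb{E}\log p = -\sum_j H_{p^*}(v_j\mid\mathrm{Pa}(v_j,G))+\text{const}$, the chain-rule/I-map argument, sparsity selecting minimal I-maps, faithfulness identifying these with the Markov equivalence class of $G^*$), which the paper simply takes on citation; your sketch is the standard and correct one. Most valuably, you flag the continuous-relaxation gap — the objective is optimized over $A\in[0,1]^{d\times d}$, where \texttt{MLE-INTERP} is not the factorized likelihood of any DAG, so Brouillard's theorem only directly covers the binary vertices — and this is a genuine issue that neither the paper's theorem statement nor its proof of the regularity theorem addresses. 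Your proposal is therefore correct in approach, matches the paper where their proofs overlap, and is more honest than the paper about what remains to be shown for the relaxed problem.
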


\begin{theorem}[Regularity of \texttt{LLM-DCD}]
    The model used by \texttt{LLM-DCD} satisfies regularity assumptions, in the limit of infinite observational samples ($n \to \infty$).
\end{theorem}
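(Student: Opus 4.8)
The plan is to verify, one at a time, the four \textbf{regularity assumptions} in the statement of the preceding theorem for the \texttt{MLE-INTERP} model family $\mathcal{F}=\{\,x_i\mapsto\texttt{MLE-INTERP}(x_i;X,A)\;:\;A\in[0,1]^{d\times d}\,\}$; the claimed Markov-equivalence conclusion then follows by directly invoking that theorem. Assumption~1 ($n\ge0$) is vacuous, and assumption~2 (faithfulness, causal Markov condition, causal sufficiency) is exactly the pair of hypotheses adopted in the problem setup, so the content lies in assumption~3 (strictly positive densities) and assumption~4 (the model class can express the ground-truth conditionals as $n\to\infty$). Both reduce to two structural facts about \texttt{MLE-INTERP}: (i) for every $A\in[0,1]^{d\times d}$ the map $x_i\mapsto\texttt{MLE-INTERP}(x_i;X,A)$ is a strictly positive, \emph{normalized} conditional probability mass function; and (ii) evaluated at the $0/1$ adjacency matrix $A^\star$ of the ground-truth DAG $G$ it reproduces the empirical conditional of $v_i$ given $\mathrm{Pa}(v_i,G)$.

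First I would establish (i). Summing the numerator of Eq.~(\ref{eq:mleinterp}) over $x_i$ marginalizes $v_i$ out of each count, and since the factors $\prod_k g(1-i_ka_{j_ki})$ do not involve $x_i$, the summed numerator equals the denominator term by term; hence $\sum_{x_i}\texttt{MLE-INTERP}(x_i;X,A)=1$ for every $A$. For strict positivity, note that $g$ is strictly increasing on $[0,1]$ (it is smooth with $g(0)=0$, $g(1)=1$ and $g'\neq0$ on $[0,1]$, so $g'>0$ there), hence $0\le g(1-i_ka_{j_ki})\le1$ for $a_{j_ki}\in[0,1]$; every summand in Eq.~(\ref{eq:mleinterp}) is thus a product of a nonnegative count with nonnegative factors, while the summand with all $i_k=0$ equals $\textrm{cnt}(x_i,\{x_j\}_{j\neq i})$ in the numerator and $\textrm{cnt}(\{x_j\}_{j\neq i})$ in the denominator. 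Under the standard full-support hypothesis on the ground-truth joint — which assumption~3 demands of the ground truth itself, and which holds for the \texttt{bnlearn} benchmarks — these counts are almost surely positive for large $n$, so \texttt{MLE-INTERP} is strictly positive and, for any $A$ with $h(A)=0$, the induced joint $\prod_i\texttt{MLE-INTERP}(x_i;X,A)$ is a strictly positive density; this is assumption~3.

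Next, for (ii) and hence assumption~4, I would evaluate Eq.~(\ref{eq:mleinterp}) at $A=A^\star$. Because $g(0)=0$ and $g(1)=1$, the factor $g(1-i_ka^\star_{j_ki})$ is $0$ when $a^\star_{j_ki}=1$ and $i_k=1$, and $1$ otherwise; so a summand survives exactly when $i_k=0$ for every $j_k\in\mathrm{Pa}(v_i,G)$, each surviving summand carries coefficient $1$, and summing them telescopes — by dropping each non-parent $v_{j_k}$ from the conditioning set via its $x_{j_k}$ and $\bar x_{j_k}$ contributions — to $\textrm{cnt}(x_i,\{x_j\}_{j\in\mathrm{Pa}(v_i,G)})$ in the numerator and $\textrm{cnt}(\{x_j\}_{j\in\mathrm{Pa}(v_i,G)})$ in the denominator. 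Thus $\texttt{MLE-INTERP}(x_i;X,A^\star)=\textrm{cnt}(x_i,\{x_j\}_{j\in\mathrm{Pa}(v_i,G)})/\textrm{cnt}(\{x_j\}_{j\in\mathrm{Pa}(v_i,G)})$, which by the strong law of large numbers over the i.i.d. rows of $X$ converges almost surely, as $n\to\infty$, to $p(v_i=x_i\mid\mathrm{Pa}(v_i,G))$; since the adjacency matrix of a DAG is nilpotent, $h(A^\star)=0$, so $A^\star$ is feasible and the model joint there converges to the ground-truth joint, giving assumption~4.

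With assumptions 1--4 in hand, the preceding theorem of \cite{brouillard2020differentiablecausaldiscoveryinterventional} applies and yields that every maximizer of Eq.~(\ref{eq:ourloss}) is Markov equivalent to the ground-truth CGM in the infinite-sample limit. The main obstacle is the combinatorial bookkeeping underlying facts (i) and (ii): one must check that the interpolating sums in Eq.~(\ref{eq:mleinterp}) telescope correctly under marginalization — both to prove normalization for arbitrary $A$ and to prove the collapse at $A^\star$ — and must do so with the $g$-smoothed factors $g(1-i_ka_{j_ki})$ rather than the raw factors $(1-i_ka_{j_ki})$. I would handle this by reducing to the two-variable identity (where $\textrm{cnt}(x_1)(1-a_{21})+\textrm{cnt}(x_1,x_2)a_{21}$ manifestly sums over $x_1$ to $N(1-a_{21})+\textrm{cnt}(x_2)a_{21}$) and then inducting on $d$, applying coordinatewise the elementary fact that summing a count over the two contributions $v_{j_k}=x_{j_k}$ and $v_{j_k}\neq x_{j_k}$ removes $v_{j_k}$ from the conditioning set.
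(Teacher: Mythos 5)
Your proof follows essentially the same route as the paper's: verify assumptions 1--4, with the key step being that \texttt{MLE-INTERP} evaluated at the ground-truth $0/1$ adjacency matrix $A^\star$ collapses to the empirical conditional of $v_i$ given its parents and hence converges to the true conditional as $n \to \infty$. You supply more detail than the paper does (the normalization identity, the explicit telescoping over non-parent factors using $g(0)=0$ and $g(1)=1$, and the appeal to the law of large numbers), but the structure and conclusion are the same.
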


\begin{proof}
    Assumption 1 is trivially satisfied, since we always include a non-empty observational data table $X$. Assumption 2 is included in the assumptions of this work. For Assumption 3, refer to the implementation of \texttt{MLE-INTERP}. The numerator term (\texttt{num}) is non-zero whenever there is at least one row in $X$ where every variable of interest takes on every one of its possible values from its finite, discrete set of values. If the joint-distribution of the latent or ground-truth CGM has strictly positive density (as is the case for all chosen datasets), Assumption 3 holds true for all distributions from the model class $\texttt{MLE-INTERP}$ in the limit of infinite samples ($n \to \infty$). Finally, we must show that Assumption 4 holds true for $\texttt{MLE-INTERP}$.

    Let $W^*$ be the adjacency matrix of the latent CGM DAG, and $W$ be the adjacency matrix used in $\texttt{LLM-DCD}$. We will show that when $W = W^*$, $\texttt{MLE-INTERP}$ models the latent CGM joint distribution. Based on the implementation of $\texttt{MLE-INTERP}$:
    \begin{align*}
        \texttt{MLE-INTERP}(x^i_j  ;  X, A^*) &= \frac{\text{cnt}\left(\mathbf{x}^i_j, \mathrm{Pa}(v_j, A^*) = \mathbf{x}^i_{\mathrm{Pa}(v_j, A^*)}\right)}{\text{cnt}\left(\mathrm{Pa}(v_j, A^*) = \mathbf{x}^i_{\mathrm{Pa}(v_j, A^*)}\right)} \\
        &\to \Pr[ v_j = \mathbf{x}^i_j \; | \; \mathrm{Pa}(v_j, A^*) = \mathbf{x}^i_{\mathrm{Pa}(v_j, A^*)}] \quad (\text{as } n \to \infty)
    \end{align*}
    Recall that $\mathrm{Pa}(v_j, A^*)$ is defined as the causal parents or direct causes of $v_j$ in the latent CGM (according to $A^*$). $\text{count}_X$ represents the number of observations or rows in the table $X$ that satisfy the specified condition. Thus, we have shown that, our specified model \texttt{MLE-INTERP} is able to model the conditional distributions in the latent CGM (as $n \to \infty$).
\end{proof}

\subsection{Other relevant metrics} \label{app:figure}

The following tables shows how relevant metrics scale with $d$ for all methods. The results follow the same trends as for SHD in Figure \ref{SHD_table}. Comparable results within the margin of error of the best performing algorithm are bolded. Note that the Recall values for the PAIR method are particularly high: this is because the pairwise LLM method tends to prioritize false positives over false negatives based on the provided correlation coefficients. This trade-off may be adjusted in future work.


\begin{table}[!h]
\centering
\caption{Performance comparison of methods on the cancer dataset ($d = 5$)}
\label{cancer-performance-table}
\begin{tabular}{lcccc}
\toprule
\textbf{Method} & \textbf{Precision} & \textbf{Recall} & \textbf{F1-score} & \textbf{Time (s)} \\
\midrule
SDCD	       & $0.56 \pm 0.00$          & $0.50 \pm 0.00$          & $0.52 \pm 0.04$          & $88.1 \pm 11.4$ \\
GES	           & $\mathbf{0.67 \pm 0.29}$ & $0.33 \pm 0.29$          & $0.44 \pm 0.38$          & $\mathbf{0.09 \pm 0.00}$ \\
DAGMA	       & $0.00 \pm 0.00$          & $0.00 \pm 0.00$          & $0.00 \pm 0.00$          & $215. \pm 1.82$ \\
PAIR	       & $0.50 \pm 0.00$          & $0.50 \pm 0.00$          & $0.50 \pm 0.00$          & $8.52 \pm 1.31$ \\
BFS	           & $\mathbf{0.82 \pm 0.00}$ & $\mathbf{1.00 \pm 0.00}$ & $\mathbf{0.90 \pm 0.10}$ & $67.2 \pm 8.81$ \\
LLM-DCD	       & $0.33 \pm 0.14$          & $0.08 \pm 0.14$          & $0.13 \pm 0.23$          & $2.14 \pm 0.19$ \\
LLM-DCD-P & $\mathbf{0.67 \pm 0.14}$          & $0.33 \pm 0.14$          & $0.44 \pm 0.19$          & $1.38 \pm 0.21$ \\
LLM-DCD-B  & $\mathbf{0.82 \pm 0.00}$ & $\mathbf{1.00 \pm 0.00}$ & $\mathbf{0.90 \pm 0.10}$ & $0.50 \pm 0.07$ \\
\bottomrule
\end{tabular}
\end{table}

\begin{table}[!h]
\centering
\caption{Performance comparison of methods on the sachs dataset ($d = 11$)}
\label{sachs-performance-table}
\begin{tabular}{lcccc}
\toprule
\textbf{Method} & \textbf{Precision} & \textbf{Recall} & \textbf{F1-score} & \textbf{Time (s)} \\
\midrule
SDCD	       & $\mathbf{0.49 \pm	0.12}$ &	$0.39 \pm 0.12$          & $\mathbf{0.43 \pm 0.13}$ & $86.5 \pm 17.5$ \\
GES	           & $0.29 \pm	0.09$          &	$0.16 \pm 0.09$          & $0.20 \pm 0.11$          & $\mathbf{1.49 \pm 0.16}$ \\
DAGMA	       & $0.03 \pm	0.03$          &	$0.02 \pm 0.03$          & $0.02 \pm 0.04$          & $240. \pm 2.64$ \\
PAIR	       & $0.27 \pm	0.00$          &	$\mathbf{1.00 \pm 0.00}$ & $\mathbf{0.43 \pm 0.00}$ & $54.9 \pm 8.01$ \\
BFS	           & $\mathbf{0.53 \pm	0.07}$ &	$0.10 \pm 0.07$          & $0.17 \pm 0.12$          & $129. \pm 37.4$ \\
LLM-DCD	       & $0.28 \pm	0.16$          &	$0.35 \pm 0.16$          & $\mathbf{0.30 \pm 0.15}$ & $5.26 \pm 1.91$ \\
LLM-DCD-P & $0.31 \pm	0.03$          &	$0.39 \pm 0.03$          & $\mathbf{0.36 \pm 0.04}$ & $\mathbf{4.75 \pm 2.06}$ \\
LLM-DCD-B  & $\mathbf{0.49 \pm	0.03}$ &	$0.37 \pm 0.03$          & $\mathbf{0.42 \pm 0.03}$ & $\mathbf{4.22 \pm 2.05}$ \\
\bottomrule
\end{tabular}
\end{table}

\begin{table}[H]
\centering
\caption{Performance comparison of methods on the child dataset ($d = 20$)}
\label{child-performance-table}
\begin{tabular}{lcccc}
\toprule
\textbf{Method} & \textbf{Precision} & \textbf{Recall} & \textbf{F1-score} & \textbf{Time (s)} \\
\midrule
SDCD	       & $\mathbf{0.28 \pm 0.12}$ & $0.49 \pm 0.12$          & $\mathbf{0.36 \pm 0.07}$ & $61.3 \pm	7.00$ \\
GES	           & $0.20 \pm 0.02$          & $0.21 \pm 0.02$          & $0.22 \pm 0.03$          & $\mathbf{28.5 \pm	3.35}$ \\
DAGMA	       & $0.04 \pm 0.02$          & $0.01 \pm 0.02$          & $0.05 \pm 0.08$          & $277. \pm	5.23$ \\
PAIR	       & $0.13 \pm 0.02$          & $\mathbf{0.99 \pm 0.02}$ & $0.18 \pm 0.10$          & $222. \pm 19.0$ \\
BFS	           & $0.07 \pm 0.06$          & $0.05 \pm 0.06$          & $0.10 \pm 0.09$          & $150. \pm	38.0$ \\
LLM-DCD	       & $0.21 \pm 0.10$          & $0.41 \pm 0.10$          & $0.19 \pm 0.10$          & $50.1 \pm	0.13$ \\
LLM-DCD-P & $\mathbf{0.30 \pm 0.08}$ & $0.61 \pm 0.08$          & $\mathbf{0.41 \pm 0.04}$ & $49.5 \pm	0.11$ \\
LLM-DCD-B  & $\mathbf{0.41 \pm 0.00}$ & $0.32 \pm 0.00$          & $\mathbf{0.39 \pm 0.06}$ & $48.8 \pm	0.12$ \\
\bottomrule
\end{tabular}
\end{table}

\begin{table}[H]
\centering
\caption{Performance comparison of methods on the alarm dataset ($d = 37$)}
\label{alarm-performance-table}
\begin{tabular}{lcccc}
\toprule
\textbf{Method} & \textbf{Precision} & \textbf{Recall} & \textbf{F1-score} & \textbf{Time (s)} \\
\midrule
SDCD	       & $0.08 \pm 0.05$          & $0.23 \pm 0.05$          & $0.12 \pm	0.02$          & $\mathbf{78.9 \pm	4.35}$ \\
GES	           & $0.06 \pm 0.02$          & $0.08 \pm 0.02$          & $0.07 \pm	0.01$          & $341. \pm	2.10$ \\
DAGMA	       & $0.06 \pm 0.02$          & $0.07 \pm 0.02$          & $0.06 \pm	0.02$          & $393. \pm	20.7$ \\
PAIR	       & $0.11 \pm 0.01$          & $\mathbf{0.97 \pm 0.01}$ & $0.20 \pm	0.01$          & $636. \pm 29.9$ \\
BFS	           & $0.26 \pm 0.08$          & $0.22 \pm 0.08$          & $0.24 \pm	0.08$          & $290. \pm	47.9$ \\
LLM-DCD	       & $0.30 \pm 0.02$          & $0.35 \pm 0.02$          & $0.32 \pm	0.02$          & $395. \pm	0.25$ \\
LLM-DCD-P & $0.33 \pm 0.00$          & $0.43 \pm 0.00$          & $0.37 \pm	0.06$          & $395. \pm	0.21$ \\
LLM-DCD-B  & $\mathbf{0.57 \pm 0.07}$ & $0.46 \pm 0.07$          & $\mathbf{0.51 \pm	0.07}$ & $394. \pm	0.20$ \\
\bottomrule
\end{tabular}
\end{table}

\begin{table}[H]
\centering
\caption{Performance comparison of methods on the hepar2 dataset ($d = 70$)}
\label{hepar2-performance-table}
\begin{tabular}{lcccc}
\toprule
\textbf{Method} & \textbf{Precision} & \textbf{Recall} & \textbf{F1-score} & \textbf{Time (s)} \\
\midrule
SDCD	        & $0.12 \pm 0.06$          & $\mathbf{0.18 \pm 0.06}$ & $\mathbf{0.14 \pm 0.04}$ & $\mathbf{66.5 \pm	26.2}$ \\
GES             & $-$                      & $-$                      & $-$                      & $-$ \\           	          
DAGMA	        & $0.02 \pm 0.01$          & $0.03 \pm 0.01$          & $0.03 \pm 0.01$          & $638. \pm	8.75$ \\
PAIR            & $-$                      & $-$                      & $-$                      & $-$ \\
BFS	            & $0.17 \pm 0.02$          & $\mathbf{0.11 \pm 0.02}$ & $\mathbf{0.13 \pm 0.03}$	& $544. \pm	50.4$ \\
LLM-DCD	        & $0.03 \pm 0.00$          & $0.05 \pm 0.00$          & $0.03 \pm 0.00$          & $623. \pm	0.26$ \\
LLM-DCD-P  & $-$                      & $-$                      & $-$                      & $-$ \\
LLM-DCD-B	& $\mathbf{0.28 \pm 0.03}$ & $\mathbf{0.13 \pm 0.03}$ & $\mathbf{0.18 \pm 0.03}$ & $623. \pm	0.38$ \\
\bottomrule
\end{tabular}
\end{table}



\end{document}